  \providecommand\BibTeX{{%
    \normalfont B\kern-0.5em{\scshape i\kern-0.25em b}\kern-0.8em\TeX}}}
\newtheorem*{rep@theorem}{\rep@title}
\newcommand{\newreptheorem}[2]{%
\newenvironment{rep#1}[1]{%
 \def\rep@title{#2 \ref{##1}}%
 \begin{rep@theorem}}%
 {\end{rep@theorem}}}
\newtheorem{theorem}{Theorem}
\newtheorem{definition}[theorem]{Definition}
\newtheorem{lemma}[theorem]{Lemma}
\newtheoremstyle{named}{}{}{\itshape}{}{\bfseries}{.}{.5em}{\thmnote{#3}}
\theoremstyle{named}
\DeclareMathOperator*{\argmax}{arg\,max}
\DeclareMathOperator*{\argmin}{arg\,min}
\DeclareSymbolFont{extraup}{U}{zavm}{m}{n}
\DeclareMathSymbol{\varheart}{\mathalpha}{extraup}{86}
\DeclareMathSymbol{\vardiamond}{\mathalpha}{extraup}{87}
\newcommand*{\rom}[1]{\expandafter\@slowromancap\romannumeral #1@}
\newcolumntype{Y}{>{\centering\arraybackslash}X}
\begin{document}

\title{Combinatorial Bandits under Strategic Manipulations}




\begin{abstract}
Strategic behavior against sequential learning methods, such as ``click framing'' in real recommendation systems, have been widely observed. Motivated by such behavior we study the problem of combinatorial multi-armed bandits (CMAB) under strategic manipulations of rewards, where each arm can modify the emitted reward signals for its own interest. This characterization of the adversarial behavior is a relaxation of previously well-studied settings such as adversarial attacks and adversarial corruption. We propose a strategic variant of the combinatorial UCB algorithm, which has a regret of at most $O(m\log T + m B_{max})$ under strategic manipulations, where $T$ is the time horizon, $m$ is the number of arms, and $B_{max}$ is the maximum budget of an arm. We provide lower bounds on the budget for arms to incur certain regret of the bandit algorithm. Extensive experiments on online worker selection for crowdsourcing systems, online influence maximization and online recommendations with both synthetic and real datasets corroborate our theoretical findings on robustness and regret bounds, in a variety of regimes of manipulation budgets.
\end{abstract}


\begin{CCSXML}
<ccs2012>
   <concept>
       <concept_id>10010147.10010257.10010282.10010284</concept_id>
       <concept_desc>Computing methodologies~Online learning settings</concept_desc>
       <concept_significance>500</concept_significance>
       </concept>
   <concept>
       <concept_id>10002951.10003260.10003282.10003296</concept_id>
       <concept_desc>Information systems~Crowdsourcing</concept_desc>
       <concept_significance>300</concept_significance>
       </concept>
   <concept>
       <concept_id>10002951.10003260.10003282.10003292</concept_id>
       <concept_desc>Information systems~Social networks</concept_desc>
       <concept_significance>300</concept_significance>
       </concept>
   <concept>
       <concept_id>10010147.10010257.10010258.10010261.10010272</concept_id>
       <concept_desc>Computing methodologies~Sequential decision making</concept_desc>
       <concept_significance>500</concept_significance>
       </concept>
 </ccs2012>
\end{CCSXML}

\ccsdesc[500]{Computing methodologies~Online learning settings}
\ccsdesc[300]{Information systems~Crowdsourcing}
\ccsdesc[300]{Information systems~Social networks}
\ccsdesc[500]{Computing methodologies~Sequential decision making}

\keywords{multi-armed bandits, strategic manipulations, crowdsourcing, online information maximization, recommendation systems}

\author{Jing Dong}
\email{jingdong@link.cuhk.edu.cn}
\affiliation{%
  \institution{The Chinese University of Hong Kong, Shenzhen}\country{}
}

\author{Ke Li}
\email{keli@link.cuhk.edu.cn}
\affiliation{%
  \institution{The Chinese University of Hong Kong, Shenzhen}\country{}
}

\author{Shuai Li}
\email{shuaili8@sjtu.edu.cn}
\affiliation{%
  \institution{Shanghai Jiao Tong University}\country{}
}

\author{Baoxiang Wang}
\email{bxiangwang@cuhk.edu.cn}
\affiliation{%
  \institution{The Chinese University of Hong Kong, Shenzhen}
  \country{}
}
\maketitle

\section{Introduction}
\label{sec:intro}
Sequential learning methods feature prominently in a range of real applications such as online recommendation systems, crowdsourcing systems, and online influence maximization problems. Among those methods, the multi-armed bandits problem serves as a fundamental framework. Its simple yet powerful model characterizes the dilemma of exploration and exploitation which is critical to the understanding of online sequential learning problems and their applications \citep{10.2307/2332286, Robbins:1952,auer2002finite,lattimore_szepesvari_2020}.
The model describes an iterative game constituted by a bandit algorithm and many arms. The bandit algorithm is required to, through a horizon $T$, choose an arm to pull at each time step. As the objective is to maximize the cumulative reward over time, the algorithm balances between exploiting immediate rewards based on the information collected or pulling less explored arms to gain more information about arms \citep{anantharam_asymptotically_1987,auer_nonstochastic_2002,cesa-bianchi_prediction_2006}.

Out of the real applications, many motivate the extension of MAB towards combinatorial multi-armed bandits (CMAB), where multiple arms can be selected in each round \citep{pmlr-v28-chen13a,DBLP:conf/nips/CombesSPL15,pmlr-v48-lif16,pmlr-v97-zimmert19a,pmlr-v117-rejwan20a}. CMAB demonstrates its effectiveness on problems like online social influence maximization, viral marketing, and advertisement placement, within which many offline variants are NP-hard. However, existing MAB and CMAB algorithms are often developed either under benign assumptions on the arms \citep{pmlr-v28-chen13a,pmlr-v38-kveton15,pmlr-v75-wei18a} or purely adversarial arms \citep{auer2002nonstochastic}. In the former setting, the arms are commonly assumed to report their reward signals truthfully without any strategic behavior, under which the drawbacks are apparent. In the latter setting, arms can attack any deployed algorithm to regret of $O(T)$ with this capability of reward manipulations, which is catastrophic for a bandit algorithm. 
This assumption is stringent and rarely realistic.

In this paper, we adapt the combinatorial UCB (CUCB) algorithm with a carefully designed UCB-based exploration term. 
A major difficulty stems from not knowing the manipulation term made by the arms, while our algorithm overcomes this by depending only on the knowledge of the maximum possible strategic budget.
Previous results by \citep{pmlr-v119-feng20c} only implies robustness of UCB style algorithm under stochastic multi armed bandits setting under only full knowledge of step wise deployment of strategic budget.
New tail bounds over the proposed exploration term and a new trade-off parameter that balances exploration and exploitation are utilized to facilitate the analysis of our algorithm.
We further establish results on the robustness of our UCB variant under strategic arms with an algorithm-dependent budget lower bound.

Our proposed algorithms are also evaluated empirically through an extensive set of synthetic environments and real datasets. 
The real applications include reliable workers selection in online crowdsourcing systems, where workers might misrepresent a result for a better chance to be selected in the future; online information maximization, where nodes modify the spread to include itself into the seed set; and online recommendation systems, which characterizes the ``click framing'' behavior. Through a wide range of tasks and parameters, the experiment results corroborate our theoretical findings and demonstrate the effectiveness of our algorithms.

\subsection{Motivating Examples}
The setting of strategic manipulations describes the strategic behavior found in a variety of real applications. 
Consider a crowdsourcing platform that provides a data labeling service for payment. The platform interacts with a group of customers and a pool of workers. The customers request tasks to the platform for labeling and the platform is then responsible for selecting workers from the worker pool to complete the tasks. This process repeats, during which the platform learns the best deployment. We maintain a mild assumption that the payments from customers are non-decreasing with the quality of labels. To maximize its profit, it is desired for the platform to select workers that provides the most reliable labels. The workflow of the platform can be described in the diagram below. 

While the platform and the customers desire quality labels, it may not entirely be in the worker's interest to exert the highest effort and thus report quality labels each time, which factors into a range of reasons. Workers may adapt strategic behaviors to maximize their own utility instead. Thus, it becomes crucial to identify reliable workers to prevent strategic behaviors that jeopardize profits.
This naturally translates to a multi-armed bandits problem under strategic manipulations where the workers are the strategic arm and the payments act as the rewards. Under ideal assumptions it had shown the effectiveness of bandits algorithms on such problems \citep{jain2014quality, tran2014efficient,rangi2018multi}. 

\begin{figure}[H]
\includegraphics[width=7.5cm]{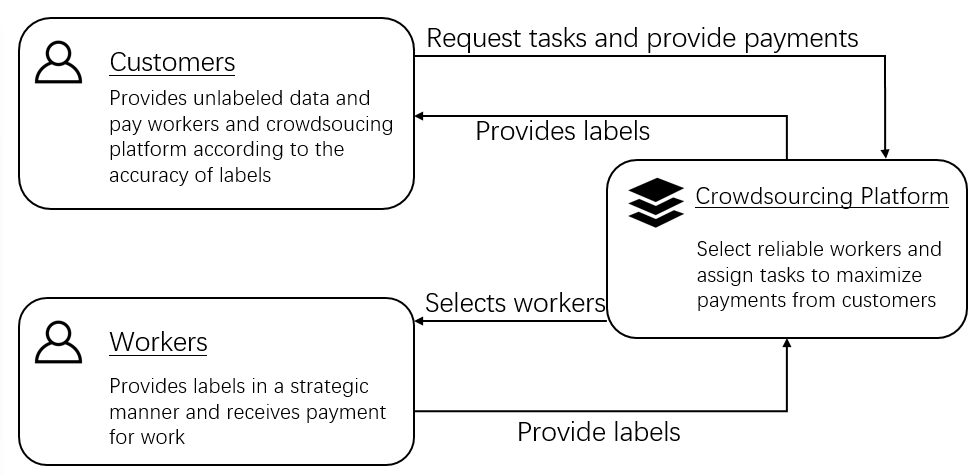}
\caption{Online crowdsourcing system}\label{crowdsys}
\end{figure}

\section{Related Work}
The problem of multi-armed bandits (MAB) was first investigated back in 1952 while some techniques utilized were developed back in 1933 \citep{10.2307/2332286, Robbins:1952, berry,auer2002finite,lattimore_szepesvari_2020}. Since then it has been extensively explored and serves as the foundation of many modern areas, including reinforcement learning, recommendation systems, graph algorithms, etc. \citep{DBLP:conf/iconip/BouneffoufBG12,pmlr-v70-vaswani17a, pmlr-v85-durand18a,10.5555/3367243.3367445,li2020stochastic}. With the need to model the selection of multiple arms in one round, MAB is then extended to combinatorial multi-armed bandits (CMAB), which see many deployments in real applications like news/goods recommendation, medical trials, routing, and so forth \citep{Wang_Ouyang_Wang_Chen_Asamov_Chang_2017,8022964,pmlr-v84-sankararaman18a, li2020online}.

Real applications motivate the community to derive algorithms in more adaptive environments. The first of which considers adversarial bandits with the classic EXP3 algorithm. The arms are assumed to be non-stationary but non-adaptive (which means that algorithms will adapt to the adversarial) \cite{auer2002nonstochastic}. Despite that adversarial bandits do not fall into the scale of this paper's related work, it leads tremendous effort to the following topics in adaptive arms.

\paragraph{Adversarial corruptions}
The adversary is given the ability to replace the observations of the bandit algorithm with arbitrary values within some constraints.
\citet{pmlr-v99-gupta19a} discuss the constraint that rewards are modified for at most $T_0$ rounds out of the total $T$ rounds. The asymptotic regret bound $O(mT_0)$ achieved in the work is shown to match the negative result, where $m$ is the number of arms. \citet{lykouris_stochastic_2018} discussed a different constraint where the corruption $z_t$ at time $t$ is cumulative up to some constant $B$.
The regret achieved under this constrained corruption is bounded by $O(mB\log(\frac{mT}{\delta}))$ with probability at least $1 - \delta$. \citet{JMLR:v20:18-395} restricts corruptions as independent Bernoulli events with probability $b$, while if corruption happens the reward becomes arbitrary and adversarial. This problem is addressed with median statistics with gap-dependent matching regret upper and lower bounds of $O(\sum_{i \neq i^{\ast}} \frac{1}{\Delta_i}\log(\frac{K}{\Delta_i}))$, where $i^{\ast}$ denotes the optimal arm and $\Delta_i$ is the suboptimality of arm $i$.

Several lines of research also discuss corruption topics in frequentist inference and partial monitoring \citep{pmlr-v83-gajane18a}, best arm identification \citep{pmlr-v99-gupta19a}, and adversarial bandits \citep{NEURIPS2020_e655c771}.

\paragraph{Adversarial attacks}
Different from arms in adversarial corruptions who intend to maximize the regret of the bandit algorithm, arms in the adversarial attack setting have the goal to maximize their number of pulls. The setting is first considered by \citet{NEURIPS2018_85f007f8}, where it shows that the attacker may spend a budget of $O(\log T)$ to deceit UCB and $\epsilon$-greedy algorithms to pull the target arms for at least $T - o(T)$ times. Stochastic and contextual MAB algorithms also suffer from undesired loss from this adversarial attack, under a variety of attack strategies \citep{pmlr-v97-liu19e}. Linear MAB algorithms, however, are shown to be near-optimal up to some linear or quadratic terms on the attack budget $B$ \citep{garcelon_adversarial_2020,bogunovic_stochastic_2020}. 

\paragraph{Strategic manipulations}
The setting of strategic manipulations further weakens the capability of the adaptive arms and prevents the bandit algorithms from being overcautious. 
The objective of an arm is still utility maximization under strategic manipulations, while each arm works on its utility individually. 
Instead of having a global coordinator for the adversarial attack strategy, the strategic arms seek the best response via the dominant Nash equilibrium. 
The strategic manipulation setting is first studied by \citet{pmlr-v99-braverman19b} where each arm is assumed to keep a portion of the reward in each round and arms maximize the cumulative reward it keeps. 
The performance of bandit algorithms will then be catastrophic under the dominant Nash equilibrium, receiving an expected cumulative reward of $0$ almost surely. 
With the utility of maximizing the number of pulls for each strategic arm, common bandits algorithms are proved to be robust with guaranteed $O(\log T)$ regret bound, but only under constant strategic budgets \citep{pmlr-v119-feng20c}. 
For $\omega(\log T)$ budgets, bandits under strategic manipulations remain an open problem and will be discussed in this paper. 

\section{Problem Formulation}
We consider the problem of combinatorial multi-armed bandits (CMAB) under the setting of strategic arms. In strategic CMAB, each arm is given a budget and the arm can strategically increase its emitted reward signals cumulatively up to this budget \emph{for its own interest}. This problem is a Stackelberg game that involves two parties. The bandit learning algorithm is deployed first to maximize its cumulative reward under the best response of the followers. The $m$ strategic bandits arms then deplete their budget where each of them aims to maximize its expected number of pulls. Knowing the principal's strategy and other followers' budget information, the followers are assumed to place their strategies according to the dominant Nash equilibrium between them. 

Formally, given a time horizon $T$, the the bandit learning algorithm is asked to pull a subset of arms, denoted by an arm subset $S_t \in \mathcal{S} $ at each time $t\in [T]$, where $\mathcal{S} = { \{0,1\}^{m}} $ is the set of all possible arm subsets. At the time $t$, based on the principal's choice of arm subset, stochastic rewards $x_{i,t}$ are generated by arm $i$ from the underlying $1$-sub-Gaussian distribution with mean $\mu_i$, independent of other arms in $S_t$. The principal does not observe these rewards. Instead, each arm can increase the emitted reward signal by an arbitrary amount $z_{i,t}\ge 0$, as long as the cumulative manipulation over the horizon does not exceed a given budget $B_i$. The principal then receives the semi-bandit feedback $\{x_{i,t}+z_{i,t}\}_{i\in S_t}$.

Let $\bm{\mu} = (\mu_1, \mu_2, ..., \mu_i) $ be the vector of expectations of all arms. The expected reward of playing any arm subset S in
any round $r_{\bm{\mu}}(S) = \mathbb{E}[R_t(S)]$,which is a function of arm subset $S$ and $\bm{\mu}$. The reward function $r_{\bm{\mu}}(S)$ is assumed to satisfy two axioms:
\begin{itemize}
    \item \textbf{Monotonicity.} The expected reward of playing any arm subset $S \in \mathcal{S}$ is monotonically non-decreasing with respect to the expected reward vector. That is, if for all $i \in [m], \mu_i \leq \mu_{i}'$, then $r_{\bm{\mu}}(S) \leq r_{\bm{\mu}'}(S)$.
    \item \textbf{Bounded smoothness.} There exists a strictly increasing function $f(\cdot)$, termed the bounded smoothness function, such that for any two expected reward vectors $\bm{\mu}$ and $\bm{\mu'}$ satisfying $\|\bm{\mu}-\bm{\mu'}\|_{\infty}\le \Lambda $, we have $|r_{\bm{\mu}} - r_{\bm{\mu'}}| \leq f({\Lambda})$.
\end{itemize}
These axioms cover a rich set of reward functions and the explicit forms of $R_t(S)$ and $r_{\bm{\mu}}(S)$ are not needed to be specified \citep{pmlr-v28-chen13a}.

Without loss of generality, assume that $S^\ast = \argmax_{S\subseteq \mathcal{S}} r_{\bm{\mu}}(S)$ is the unique optimal subset of arms. When placing reward manipulations, each strategic arm has access to its own history $h_{i,t} = \{I_{i,t'}, x_{i,t'}, z_{i,t'}\}_{t'\ge 1}$, where $I_{i,t'}$ is the indicator of whether arm $i$ is pulled at time $t'$ and $t' < t$. The strategy of arm $i$ is determined by a function that maps this history to a manipulation $z_{i,t}$, as $Z_{i,t} \colon h_{i,t-1} \to \mathbb{R}$. Without loss of generality, we assume that arms in the optimal arm subset $i\in S^\ast$ have strategic budgets of $0$, which restricts their $z_{i,t}$ to be $0$.

In the combinatorial setting, even with the exact reward vector $\bm\mu$ provided, it can be hard to exactly compute the optimized $r_{\bm{\mu}}(S)$. In view of this, many have studied probabilistic approximation algorithms in combinatorial problems, which indicates that an ($\alpha,\beta$)-approximation oracle defined below can be usually available. 

\begin{definition}[Approximation oracle]
Let $0\le \alpha,\beta \leq 1$ and define ${OPT}_{\bm{\mu}} = \max_{S\in \mathcal{S}} r_{\bm{\mu}}(S)$. 
An oracle is called an ($\alpha,\beta$)-approximation oracle if it takes an expected reward vector $\bm{\mu}$ as input and outputs an arm subset $S \in \mathcal{S}$ such that $\mathbb{P}(r_{\bm{\mu}}(S) \geq \alpha \cdot {OPT}_{\bm{\mu}}) \geq \beta$. That is, the oracle gives an arm subset $S$ that is at least as good as $\alpha$ times the reward of an optimal arm subset with probability at least $\beta$. 
\end{definition}
Denote ${S_B} = \{ r_{\bm{\mu}}(S) < \alpha \cdot {OPT}_{\bm{\mu}} | S \in \mathcal{S}\}$ to be the set of suboptimal arm subsets under the approximation oracle. 
Note that a suboptimal arm subset can be given by the oracle for two reasons. The ($\alpha,\beta$)-approximation oracle can fail, which happens with probability at most $1-\beta$. The estimation of $\bm{\mu}$ can deviate from the true value by a significant amount, resulting in accurate input to the oracle.

The objective of the principal is to maximize the expected cumulative reward before manipulation over the time horizon $T$. Equivalently, the principal minimizes the regret, the cumulative difference between the scaled optimal reward and expected actual reward, as defined below. 

\begin{definition}[Regret]
With access to an ($\alpha,\beta$)-\textit{approximation oracle}, the regret of a combinatorial bandit algorithm for $T$ rounds is
\[
Regret_{\bm{\mu},\alpha,\beta}(T) = T \cdot \alpha \cdot \beta \cdot {OPT}_{\bm{\mu}} - \mathbb{E}\big[\sum^T_{t=1}r_{\bm{\mu}}(S_t)\big]\,,
\]
where the randomness in $\mathbb{E}[\sum^T_{t=1}r_{\bm{\mu}}(S_t)]$ involves the stochasticity of the bandit algorithm and the oracle.
\end{definition}

The objective of each strategic arm $i\in [m]\setminus S^{\ast}$, however, is to maximize the number $\sum_{t=1}^T I_{i,t}$ of times it is pulled over the time horizon. To achieve this, the arm needs to confuse the principal by deviating the emitted reward signals up to the possessed budget. 


\section{Strategic Combinatorial UCB}
We now propose a variant of combinatorial upper confidence bound algorithm that is robust to strategic manipulations of rewards in Algorithm \ref{algo}. The only mild assumption we maintain is that the learning algorithm has the knowledge of the largest budget possessed among all bandits arms, i.e, $B_{max}$. This is a relaxation of the assumptions made by \citet{pmlr-v119-feng20c} on the strategic UCB algorithm, in which the learning algorithm has access to the cumulative use of budget at every time step. We start with a detailed description of the algorithm and then analyze the theoretical upper bound of regret, which enjoys $O(m \log{T} + m B_{max})$. 

For each arm $i$, our algorithm maintains a counter $K_{i,t-1}$ as the total number of times arm $i$ has been pulled up to time $t-1$ and $\widetilde{\mu}_{i,t} = \frac{\sum_{j=1}^{t-1} (x_{i,j}+ z_{i,j})}{K_{i,t-1}}$ as the empirical mean estimation based on the observations. At each time step, the algorithm computes the UCB estimation $\Bar{\mu}_{i,t}= \widetilde{\mu}_{i,t} + \sqrt{\frac{ 3 \log{t} }{2K_{i,t-1}}} + \frac{B_{max}}{K_{i,t-1}}$ for $i \in [m]$. With $\Bar{\mu}_{i,t}$, the $(\alpha, \beta)$-approximation oracle then outputs an approximately optimal arm subset $S_t \in \mathcal{S}$. The algorithm plays the return arm subset and update the counter $K_{i,t}$ and the estimation $\widetilde{\mu}_{i,t}$ accordingly. 

\begin{algorithm}[htb]
\DontPrintSemicolon
{\bf Input:}
Horizon $T$, number $m$ of arms, maximum budget $B_{max}$\;
{\bf Output:} Arm subset $S_t$\; 
  Initialize $K_{i,0} = 0$ and $\widetilde{\mu}_{i,0} = 0$ for $i \in [m]$\; 
  \For{t = 1 $\to$ m}{
    Play an arbitrary arm subset $S_t \in \mathcal{S}$ such that $t \in S_t$\; 
    $K_{i,t} = K_{i,t-1} + 1$ for $i \in S_t$\;
    $\widetilde{\mu}_j = x_{i,t}+z_{i,t}$ for $i \in S_t$ \;}
  \For{t = m+1 $\to$ T}{
  For $i \in [m]$, compute $\Bar{\mu}_{i,t}= \widetilde{\mu}_{i,t-1} + \sqrt{\frac{ 3 \log{t} }{2K_{i,t-1}}} + \frac{B_{max}}{K_{i,t-1}} $\;
  $S_t$ = {\textbf{oracle}}$(\Bar{\mu}_{1,t},\Bar{\mu}_{2,t},.....,\Bar{\mu}_{m,t})$\;
  Play arm subset $S_t$ and update 
  $K_{i,t} = K_{i,t-1} + 1$ and $ \widetilde{\mu}_{i,t} =  \frac{\sum_{j=1}^{t} (x_{i,j} + z_{i,j})}{K_{i,t}}$ for $i \in S_t$ \;
  }
\caption{SCUCB}
\label{algo}
\end{algorithm}

We first introduce a few notations that are used in our results. Define, for any arm $i\in[m]$, the suboptimality gaps as
\begin{align*}
        &\Delta^i_{min} = \alpha \cdot {OPT}_{\bm{\mu}} - \max\{r_{\bm{\mu}}(S) \mid S \in S_B, i \in S\}\,, \notag \\
     &\Delta^i_{max} = \alpha \cdot {OPT}_{\bm{\mu}} - \min\{r_{\bm{\mu}}(S) \mid S \in S_B, i \in S\}\,.
\end{align*}
We then denote the maximum and minimum of the suboptimality gaps as $\Delta_{max} = \max_{i \in [m]} \Delta^i_{max}\notag $ and $\Delta_{min} = \min_{i \in [m]} \Delta^i_{min}$.

The following lemma re-establish the canonical tail bound inequality in UCB under the setting of strategic manipulations.
\begin{lemma}\label{lambda}
Let ${\Lambda}_{i,t} = \sqrt{\frac{3 \log{t} }{2K_{i,t-1}}} + \frac{\rho_{i,t-1}}{K_{i,t-1}}$, where $\rho_{i,t-1}$ is the total strategic budget spent by arm $i$ up to time $t-1$ and $K_{i,t-1}$ be the total number of pulls of arm $i$ up to time $t-1$. Define the event $E_t = \{ |\ \widetilde{\mu}_{i,t-1} - \mu_{i}| \leq \Lambda_{i,t}, \forall i \in [m]\}$, where $\mu_i$ is the true mean of arm $i$'s underlying distribution. 
Then, $\mathbb{P}(\lnot E_t)\leq 2m \cdot t^{-2}$.
\end{lemma}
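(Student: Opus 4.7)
The plan is to decompose the strategically-manipulated empirical mean into a stochastic component and a deterministic manipulation component, concentrate the former and bound the latter in absolute value, then union-bound to handle the randomness of $K_{i,t-1}$ and over the $m$ arms. Specifically, writing the sum $\sum_{j=1}^{t-1}(x_{i,j}+z_{i,j})$ over only those rounds in which $i$ was pulled, one has
\[
\widetilde{\mu}_{i,t-1} - \mu_i \;=\; \Bigl(\tfrac{1}{K_{i,t-1}}\textstyle\sum_{j: i\in S_j} x_{i,j} - \mu_i\Bigr) \;+\; \tfrac{1}{K_{i,t-1}}\textstyle\sum_{j: i\in S_j} z_{i,j}.
\]
The second term is nonnegative and, by the definition of $\rho_{i,t-1}$ as the cumulative strategic spending of arm $i$, is deterministically at most $\rho_{i,t-1}/K_{i,t-1}$. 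Hence it suffices to show that the first (purely stochastic) term is at most $\sqrt{3\log t/(2K_{i,t-1})}$ in absolute value with probability at least $1-2m t^{-2}$.

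Next I would fix a realization $K_{i,t-1}=s$ and apply the $1$-sub-Gaussian (Hoeffding-type) tail bound to the average of $s$ independent draws of $x_{i,j}$, giving
\[
\PP\!\left( \Bigl|\tfrac{1}{s}\textstyle\sum_{j=1}^{s} x_{i,j} - \mu_i\Bigr| > \sqrt{\tfrac{3\log t}{2s}} \right) \leq 2\exp(-3\log t) \;=\; 2t^{-3}.
\]
Because $K_{i,t-1}$ is itself random and correlated with past observations (through the learner's oracle and through the strategic budget spending), I would handle it by a standard union bound over all realizable values $s\in\{1,2,\dots,t-1\}$, yielding a per-arm failure probability at most $2(t-1)t^{-3}\leq 2t^{-2}$. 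A final union bound over the $m$ arms then gives $\PP(\lnot E_t)\leq 2m t^{-2}$.

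The only subtle step is the treatment of $K_{i,t-1}$ being a data-dependent (effectively stopping-time) quantity; the union-bound-over-$s$ trick sidesteps this without needing martingale concentration, which is why we can afford the slightly inflated $3\log t$ constant in the confidence radius. Everything else is bookkeeping: the manipulation term needs no probabilistic argument because $z_{i,j}\ge 0$ and $\sum_{j} z_{i,j}\leq \rho_{i,t-1}$ by the budget constraint, so its contribution is already absorbed into $\Lambda_{i,t}$ by construction.
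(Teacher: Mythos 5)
Your proposal is correct and follows essentially the same route as the paper's proof: isolate the manipulation term, bound it deterministically by $\rho_{i,t-1}/K_{i,t-1}$ using the budget constraint, apply the sub-Gaussian tail bound at radius $\sqrt{3\log t/(2s)}$ for each fixed realization $K_{i,t-1}=s$, and union-bound over $s\in\{1,\dots,t-1\}$ and over the $m$ arms to obtain $2m\cdot t^{-2}$. The only cosmetic difference is that the paper splits $|\widetilde{\mu}_{i,t-1}-\mu_i|>\Lambda_{i,t}$ into two one-sided events and drops the $-2\rho_{i,t-1}/K_{i,t-1}$ shift, whereas you absorb the (nonnegative, bounded) manipulation term directly; both reduce to the same stochastic concentration step.
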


Armed with Lemma \ref{lambda}, we present one of our main theorems, Theorem \ref{appendixA}, which gives the regret bound of $O(\log T)$ of SCUCB. The outline of the proof follows that of CUCB by \citet{pmlr-v28-chen13a}. To complete the proof, we carefully choose $\psi_t$, which controls the trade-off between the exploration and exploitation periods.
\begin{theorem}
\label{appendixA}
The regret of the SCUCB algorithm with $m$ strategic arms in time horizon $T$ using an $(\alpha,\beta)$-approximation oracle is at most 
\begin{align}
 m\cdot\Delta_{max}\left( \frac{8B_{max} f^{-1}(\Delta_{min}) + 6\log{T}} {\left(f^{-1}(\Delta_{min})\right)^2} + \frac{\pi^2}{3}+1 \right)\,, \notag \nonumber
\end{align}
where $f^{-1}(\cdot)$ is the inverse bounded smoothness function.
\end{theorem}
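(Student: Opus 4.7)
The plan is to mirror the CUCB regret analysis of \citet{pmlr-v28-chen13a}, with two adaptations that handle strategic manipulations: use Lemma~\ref{lambda} in place of the standard Hoeffding tail to absorb the manipulation $\rho_{i,t-1}$, and enlarge the counter threshold $\ell_T$ so that the additive $B_{max}/K_{i,t-1}$ term in the algorithm's bonus is also driven below the smoothness tolerance. The regret will be split into three pieces: the initial $m$ exploration rounds, the failure event $\lnot E_t$, and rounds on $E_t$ that select a suboptimal arm subset.

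\textbf{Concentration and characterization of bad rounds.} Since the algorithm's bonus $\sqrt{3\log t/(2K_{i,t-1})}+B_{max}/K_{i,t-1}$ pointwise dominates the $\Lambda_{i,t}$ of Lemma~\ref{lambda} (using $\rho_{i,t-1}\le B_{max}$), on $E_t$ every UCB index obeys $\mu_i\le\bar\mu_{i,t}\le\mu_i+2\Lambda_{i,t}$. Choose $\ell_T=\lceil(8B_{max}f^{-1}(\Delta_{min})+6\log T)/(f^{-1}(\Delta_{min}))^2\rceil$, tuned so that $K_{i,t-1}\ge\ell_T$ drives $2\Lambda_{i,t}$ below $f^{-1}(\Delta_{min})$. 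If every $i\in S_t$ satisfied $K_{i,t-1}\ge\ell_T$ on $E_t$, chaining the oracle's $\alpha$-approximation guarantee on input $\bar{\bm\mu}_t$, monotonicity (which gives $\mathrm{OPT}_{\bar{\bm\mu}_t}\ge\mathrm{OPT}_{\bm\mu}$ from $\bar{\bm\mu}_t\ge\bm\mu$), and bounded smoothness applied to $S_t$ would produce
\[
\alpha\cdot\mathrm{OPT}_{\bm\mu}-r_{\bm\mu}(S_t)\;\le\;f\!\left(\max_{i\in S_t}2\Lambda_{i,t}\right)\;<\;\Delta_{min},
\]
contradicting $S_t\in S_B$. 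Hence on $E_t$, whenever $S_t\in S_B$ some $i\in S_t$ must have $K_{i,t-1}<\ell_T$.

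\textbf{Counter assembly and main obstacle.} Charge each suboptimal round on $E_t$ to one undersampled arm from $S_t$ (ties broken arbitrarily) and maintain per-arm counters $N_i$; since $N_i$ advances only while $K_{i,t-1}<\ell_T$, we get $N_i\le\ell_T$, so the bad-round regret on $E_t$ is at most $m\ell_T\Delta_{max}$. The $m$ initialization rounds contribute at most $m\Delta_{max}$ (the $+1$ inside the bracket), and Lemma~\ref{lambda} combined with $\sum_{t\ge 1}t^{-2}=\pi^2/6$ bounds the $\lnot E_t$ regret by $m\Delta_{max}\pi^2/3$. Summing the three pieces yields the stated bound. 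The principal obstacle is the choice of $\ell_T$: the additive budget term breaks the classical $\ell_T=\Theta(\log T/(f^{-1}(\Delta_{min}))^2)$, so the threshold must simultaneously control both the logarithmic concentration and the budget pieces, and this balancing is exactly what produces an additive $mB_{max}$ term in the regret rather than a multiplicative inflation of $\log T$; it is also what lets the algorithm depend only on the aggregate $B_{max}$ rather than the stepwise budget profile assumed in \citep{pmlr-v119-feng20c}.
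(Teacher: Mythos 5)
Your proposal follows essentially the same route as the paper: the per-arm counters charged to an undersampled arm of each bad round, the enlarged threshold $\ell_T=\psi_T=\frac{8B_{max}f^{-1}(\Delta_{min})+6\log T}{(f^{-1}(\Delta_{min}))^2}$, Lemma~\ref{lambda} plus $\sum_t t^{-2}\le\pi^2/6$ for the $\lnot E_t$ rounds, and the contradiction obtained by chaining monotonicity, the oracle guarantee, and bounded smoothness. The structure and the constants match the paper's proof.

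There is, however, one genuine gap in your accounting: you never handle the event $F_t$ that the $(\alpha,\beta)$-approximation oracle \emph{fails}, which happens with probability up to $1-\beta$ in every round. Your key claim ``on $E_t$, whenever $S_t\in S_B$ some $i\in S_t$ must have $K_{i,t-1}<\ell_T$'' is false without additionally conditioning on $\lnot F_t$: when the oracle fails it may return a suboptimal subset all of whose arms are well-sampled, and such rounds are charged to no counter, so your three-piece split (initialization, $\lnot E_t$, undersampled rounds on $E_t$) does not cover all suboptimal pulls. The paper inserts a fourth piece, $\sum_{t>m}\mathbb{I}\{F_t\}\le (T-m)(1-\beta)$ in expectation, and then shows this term is absorbed by the $\alpha\beta$-scaling in the regret definition: the contribution $((T-m)\Delta_{max}-T\alpha\,\mathrm{OPT}_{\bm\mu})(1-\beta)$ is nonpositive because $\Delta_{max}\le\alpha\cdot\mathrm{OPT}_{\bm\mu}$. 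You need this step (or an equivalent one) to reach the stated bound; as written the argument only covers the $\beta=1$ case. A second, cosmetic imprecision: on $E_t$ the upper deviation of the index is $\bar\mu_{i,t}-\mu_i\le 2\bigl(\sqrt{3\log t/(2K_{i,t-1})}+B_{max}/K_{i,t-1}\bigr)$ rather than $2\Lambda_{i,t}$ (which involves $\rho_{i,t-1}\le B_{max}$); your threshold is tuned for the former quantity, so this does not affect correctness, but the inequality should be stated with $B_{max}$.
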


\begin{proof}[Proof sketch]
We introduce a counter $N_i$ for each arm $i\in[m]$ after the $m$-round initialization and let $N_{i,t}$ be the value of $N_i$ at time $t$. We initialize $N_{i,m} = 1$. By definition, $\sum_{i \in [m]} N_{i,m} = m$. For $t > m$, the counter $N_{i,t}$ is updated as follows:
\begin{itemize}
    \item If $S_t \in S_B$, then $N_{{i'},t} = N_{{i'},t-1} + 1$ where $i' =$ $\argmin_{i \in S_t}$ $N_{i,t-1}$. In the case that $i'$ is not unique, we break ties arbitrarily;
    \item If $S_t \notin S_B$, then no counters will be updated. 
\end{itemize}  
As such, the total number of pulls of suboptimal arm subsets is less than or equal to $\sum_{i=1}^m N_{i,T}$.

Define $\psi_t = \frac{8B_{max}f^{-1}(\Delta_{min}) + 6\log{t}} {\left(f^{-1}(\Delta_{min})\right)^2} > c $, where $c$ is the larger solution of
\begin{align*}
\label{square}
    &(f^{-1}(\Delta_{min}))^2 c^2  + 16B_{max}^2 - ((8B_{max} f^{-1}(\Delta_{min})- 6\log{t}) c = 0 \,. 
\end{align*}
Then, we decompose the total number $\sum_{i=1}^m N_{i,T}$ of pulls of suboptimal arm subsets as
\begin{align}
\sum_{i=1}^m N_{i,T} = \ & m +\sum_{t=m+1}^T \mathbb{I}\{ S_t \in S_B\} \notag \\
=\ &m +\sum_{t=m+1}^T \sum_{i=1}^m \mathbb{I}\{ S_t \in S_B, N_{i,t} > N_{i,t-1}, N_{i,t-1} \leq \psi_t\} \notag \\
&+ \sum_{t=m+1}^T \sum_{i=1}^m \mathbb{I}\{ S_t \in S_B, N_{i,t} > N_{i,t-1}, N_{i,t-1} > \psi_t\}\notag \\
\leq \ & m + m\psi_T+\sum_{t=m+1}^T \mathbb{I}\{ S_t \in S_B,N_{i,t-1} > \psi_t, \forall i \in S_t\}\,. \notag
\end{align}

The inequality follows as $\sum_{t=m+1}^T \sum_{i=1}^m \mathbb{I}\{ S_t \in S_B, N_{i,t} > N_{i,t-1}, N_{i,t-1} \leq \psi_t\}$ can be trivially bounded by $m\psi_T$. Thus the key to bound the total number of pulls of suboptimal arm subset is to upper bound $\sum_{t=m+1}^T \mathbb{I}\{ S_t \in S_B,N_{i,t-1} > \psi_t, \forall i \in S_t\}$. Let $F_t$ denotes the event that the oracle fails to provide an $\alpha$-approximate arm subset with respect to the input vector $\Bar{\bm{\mu}} = (\Bar{\mu}_{1,t},\Bar{\mu}_{2,t},.....,\Bar{\mu}_{m,t})$. 
Then we can decompose $\sum_{t=m+1}^T \mathbb{I}\{ S_t \in S_B,N_{i,t-1} > \psi_t, \forall i \in S_t\}$ as 
\begin{align}
&\quad \sum_{t=m+1}^T \mathbb{I}\{ S_t \in S_B,N_{i,t-1} > \psi_t, \forall i \in S_t\}\notag\\
&\leq \sum_{t=m+1}^T (\mathbb{I}\{F_t\} + \mathbb{I}\{ \lnot F_t, S_t \in S_B,  K_{i,t-1} > \psi_t, \forall i \in S_t\})\notag \\
&\leq (T-m)(1-\beta) +  \mathbb{I}\{ \lnot F_t, S_t \in S_B, K_{i,t-1} > \psi_t, \forall i \in S_t\}\notag \,. \notag
\end{align}

By leveraging the monotonicity and smoothness assumptions of the reward function, we show that $\mathbb{P}(\{\lnot F_t, E_t,  S_t \in S_B, K_{i,t-1} > \psi_t, \forall i \in S_t\}) = 0$. Therefore, by the inclusion-exclusion principle,
\begin{align}
     \mathbb{P}(\{\lnot F_t, S_t \in S_B, K_{i,t-1} > \psi_t, \forall i \in S_t\}) 
     &\leq \mathbb{P}(\lnot E_t) \leq 2m \cdot t^{-2} \,.\notag
\end{align}

Leveraging the upper bound of total number of pulls of suboptimal arm subsets, the regret is bounded by
\begin{align}
    & \quad Regret_{\bm{\mu},\alpha,\beta}(T)
    \notag \\
    & \leq T\alpha \beta \cdot \text{OPT}_{\bm{\mu}} - \left( T\alpha\cdot \text{OPT}_{\bm{\mu}} - \mathbb{E} \left[ \sum_{i=1}^m N_{i,T} \right]\cdot \Delta_{max} \right)
    \notag \\
     &\leq m\cdot \Delta_{max}\left(\frac{8B_{max} f^{-1}(\Delta_{min}) + 6\log{T}} {\left(f^{-1}(\Delta_{min})\right)^2} + \frac{\pi^2}{3}+1\right)\,. \notag \qedhere
\end{align}
\end{proof}

It remains in question whether a bandit algorithm can achieve a regret upper bound sublinear in $B_{max}$. 
Our conjecture is negative.
In fact, under strategic manipulations of rewards, the design of robust bandit algorithms, e.g. UCB and $\epsilon$-greedy, is analogous to the design of outlier-robust mean estimation algorithms. Existing works on robust mean estimation, such as \citet{steinhardt_et_al:LIPIcs:2018:8368}, argue that from an information theoretical point of view mean estimation error must be depending on the variance of the data. Casting this argument to bandit with the strategic manipulation setting, we believe that a tight regret bound is unlikely to be independent of the strategic budget $B_{max}$. Moreover, Proposition 4 in \citet{steinhardt_et_al:LIPIcs:2018:8368} implies that the dependency on $B_{max}$ is linear. This corresponds to our linear dependency of regret in Theorem \ref{appendixA}.

\section{Lower Bounds for strategic budget}
To fully understand the effects of strategic manipulations, we investigate the relationship between the strategic budget and the performance of UCB-based algorithms. Our results provide the dependency between an arm's strategic budget and the number of times it is pulled, which influence the regret of the algorithm incurred by the arm. Our analysis gains some insight from \citet{zuo_near_2020}, which limits the discussion to the 2-armed bandit setting and cannot be applied to the general MAB setting directly. 

We first define some notations used in the theorem.
Without loss of generality, let arm $i^{\ast}$ be the optimal arm and arm $i, i \neq i^{\ast}$ be an arbitrary strategic suboptimal arm. Assume that arm $i$ has no access to the information regarding other arms. Let $K_{i,t}$ and $K_{i^{\ast},t}$ denote the number of times arm $i$ and arm $i^{\ast}$ have been pulled up to time $t$, respectively. Denote $\hat{\mu}_{i,t}$ as the empirical estimate of the underlying mean $\mu$ without manipulations, i.e., $\hat{\mu}_{i,t} = \frac{\sum^{t-1}_{j=1}x_{i,j}}{K_{i,t-1}}$. Define the suboptimality gap for each strategic arm $i$ to be $\delta_i = \mu_{i^{\ast}} - \mu_i$. We use a slightly revised UCB algorithm as the basic algorithm, where the UCB estimation term for arm $i$ is $\hat{\mu}_{i,t} + \sqrt{\frac{2\log(K_{i,t}^2/\eta^2)}{K_{i,t}}}$ and $\eta$ is a confidence parameter chosen by the algorithm.

\begin{theorem}\label{lower}
In stochastic multi-armed bandit problems, for a strategic suboptimal arm $i$ without access to other arms' information, to be pulled for $\omega(k)$ in $T$ steps under the UCB algorithm where $k\ge O(\log{T})$, the minimum strategic budget is $\omega(k)$. 
\end{theorem}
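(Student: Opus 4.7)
The plan is to derive, at each round $t$ at which the strategic arm $i$ is selected, a pointwise inequality relating its cumulative manipulation $\rho_{i,t-1}$ to its pull count $K_{i,t-1}$, and then to evaluate this inequality under the hypothesis $K_{i,T}=\omega(k)$. The starting point is the standard UCB ``good event'': thanks to the confidence parameter $\eta$ built into the index, sub-Gaussian concentration combined with a union bound over $t\in[T]$ yields $\PP[\exists t\colon \mathrm{UCB}_{i^{\ast}}(t) < \mu_{i^{\ast}}] = O(1/T)$. Conditioning on the complementary event, whenever the algorithm plays arm $i$ at round $t$ the selection rule forces $\mathrm{UCB}_i(t) \ge \mathrm{UCB}_{i^{\ast}}(t) \ge \mu_{i^{\ast}}$.

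Next I would expand the UCB index actually used by the algorithm. Since the index is computed from the manipulated rewards, $\mathrm{UCB}_i(t) = \hat\mu_{i,t} + \rho_{i,t-1}/K_{i,t-1} + \sqrt{2\log(K_{i,t-1}^2/\eta^2)/K_{i,t-1}}$. A second sub-Gaussian concentration bound on the unmanipulated mean $\hat\mu_{i,t}$, absorbed into the same $O(1/T)$ failure probability by a further union bound, gives $\hat\mu_{i,t} \le \mu_i + O(\sqrt{\log T / K_{i,t-1}})$. Substituting into $\mathrm{UCB}_i(t) \ge \mu_{i^{\ast}}$ and using $\delta_i = \mu_{i^{\ast}}-\mu_i$ produces the pointwise budget bound
\[
\rho_{i,t-1} \;\ge\; \delta_i\, K_{i,t-1} \;-\; O\!\left(\sqrt{K_{i,t-1}\log T}\right).
\]
Applying this at the last round at which arm $i$ is selected yields $K_{i,t-1} = K_{i,T}-1 = \omega(k)$. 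Because $k \ge \Omega(\log T)$, we have $K_{i,t-1} = \omega(\log T)$ and hence $\sqrt{K_{i,t-1}\log T} = o(K_{i,t-1})$. Treating $\delta_i$ as a positive constant, we conclude $\rho_{i,T} \ge \rho_{i,t-1} \ge (1-o(1))\,\delta_i K_{i,t-1} = \omega(k)$, which is the required strategic-budget lower bound. The $O(1/T)$ bad-event probability contributes only $o(1)$ to the expected budget and is absorbed into the asymptotic.

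The main obstacle I anticipate is the ``free'' initial regime: while $K_{i,t-1}$ is still $O(\log T / \delta_i^2)$, the UCB exploration term alone already exceeds $\delta_i$, so arm $i$ can be pulled with zero manipulation. This is precisely why the hypothesis restricts to $k \ge \Omega(\log T)$: these unavoidable free pulls contribute only an additive $O(\log T / \delta_i^2)$ to the count and are absorbed by $\omega(k)$. A secondary subtlety is measurability: the strategic arm does not observe other arms, so $z_{i,t}$ is adapted only to its own history, but because $x_{i,t}$ is i.i.d.\ across rounds the concentration bounds on $\hat\mu_{i,t}$ remain valid and the argument goes through.
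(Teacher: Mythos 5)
Your proposal is correct and follows essentially the same route as the paper's proof: both arguments fix the (last) round at which arm $i$ is pulled, use the UCB selection rule to compare indices, apply Chernoff--Hoeffding concentration to replace the empirical means of arm $i$ and arm $i^{\ast}$ by their true means, and conclude $\rho_i \ge \bigl(\delta_i - o(1)\bigr)K_{i,t} = \omega(k)$ once $K_{i,t}=\omega(k)$ with $k \ge \Omega(\log T)$ makes the exploration term negligible. Your explicit handling of the ``free pulls'' regime and of the adaptivity of $z_{i,t}$ are welcome refinements the paper leaves implicit, but they do not change the substance of the argument.
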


This dependency of $k$ can be extended to CMAB and CUCB straightforwardly when arms within an arm subset collude. Counter-intuitively, the dependency between the number of pulls of a strategic arm and its strategic budget is linear and subsequently, this infers that for a strategic arm to manipulate the algorithm to suffer an undesired regret of $\omega(k)$, where $k\ge O(\log T)$, the strategic budget must be at least $\omega(k)$.

\section{Experiments}
In this section, we evaluate our SCUCB algorithm empirically on synthetic data and three real applications, namely online worker selection in crowdsourcing, online recommendation, and online influence maximization. We highlight the best performance among all algorithms with bold text. 

\subsection{Baseline Algorithms}

We compare our proposed SCUCB algorithm with both stochastic and adversarial bandits algorithms that achieves optimal asymptotic regret in their settings.
\begin{enumerate}
\allowdisplaybreaks
    \item CUCB \cite{pmlr-v28-chen13a}. CUCB is the naive counterpart of our algorithm. The algorithm calculates an upper confidence interval for each arm and picks the best arm subset with the highest upper confidence interval. 
    \item TSCB \cite{wang2018thompson}. TSCB is the combinatorial version of the classical Thompson sampling algorithm. The algorithm maintains a prior beta distribution estimation for each arm and updates according to the received reward. At each time, the algorithm samples from the estimated distributions and pick actions according to the highest sample. 
    \item Combinatorial variant of EXP3 \cite{auer_nonstochastic_2002}. The algorithm maintains a weight for each arm and draws actions according to the normalized weight distribution. Upon receiving rewards, the algorithm update weight according to the classical EXP3 update rule. 
\end{enumerate}

\subsection{Synthetic Experiments}
We conduct experiments presented in this section with synthetic data and compare our proposed algorithm with its naive counterpart. The approximation oracle is designed to succeed with probability $1$. Each bandit arm is modeled to follow a Bernoulli distribution with randomly populated $\mu \in [0,1]$ and all arms adapt LSI strategy. 
The arms in the optimal arm subset have a strategic budget of $0$ since an additional strategic budget for optimal arms would only boost the performance of our algorithm. All other arms are equipped with a randomly allocated budget $B$, $ 0 \leq B \leq B_{max}$ by definition. 
As is typical in the bandit literature, for example in \citet{auer2002finite}, we evaluate both the naive and strategic CUCB algorithms on their tuned versions, where the UCB exploration parameter is scaled by a constant factor $\gamma \in [0,1]$. To ensure reproducible results, each experiment is repeated for $10$ random seeds and the averaged result is presented.  

\begin{table}[H]
\centering
\begin{tabularx}{0.47\textwidth}{@{}lY@{\extracolsep{\fill}}YYY@{}}
\toprule
\multicolumn{5}{c}{Cumulative regret, Action size = 2, K = 10}                                       \\ \midrule
\multicolumn{1}{l|}{Dataset}        & \multicolumn{4}{c}{Synethetic}                                        \\ \midrule
\multicolumn{1}{l|}{Bmax}           & 70              & 90              & 110             & 130             \\
\multicolumn{1}{l|}{CUCB}           & 171.74          & 187.51          & 259.04          & 256.66          \\
\multicolumn{1}{l|}{SCUCB} & \textbf{143.85} & \textbf{172.57} & \textbf{208.53} & \textbf{233.57} \\ \bottomrule
\end{tabularx}
\begin{tabularx}{0.47\textwidth}{@{}lY@{\extracolsep{\fill}}YYY@{}}
\toprule
\multicolumn{5}{c}{Cumulative regret, Action size = 2, K = 20}                                        \\ \midrule
\multicolumn{1}{l|}{Dataset}        & \multicolumn{4}{c}{Synethetic}                                         \\ \midrule
\multicolumn{1}{l|}{Bmax}           & 70              & 90              & 110             & 130              \\
\multicolumn{1}{l|}{CUCB}           & 434.82          & 492.02          & 520.97          & 549.57           \\
\multicolumn{1}{l|}{SCUCB} & \textbf{301.57} & \textbf{365.94} & \textbf{450.10} & \textbf{505.453} \\ \bottomrule
\end{tabularx}
\caption{Cumulative regret achieved by CUCB and SCUCB with synthetic data with various action size.}\label{table:1}
\end{table}

The first set of experiment is conducted with $K = \{10, 20\}$ arms through a time horizon of $T=5000$ time steps with maximum possible budget of $B_{max} = \{70, 90, 110, 130\}$. The algorithms are asked to select $\text{Action size} = 2$ arms as an arm subset at each time step. The cumulative regret incurred by CUCB and SCUCB algorithm is presented in the table \ref{table:1} where the best performance is highlighted in bold text. Clearly, SCUCB demonstrated its effectiveness as it achieves significantly smaller regrets in various possible maximum strategic budgets. 

The next two tables reveals the advantage of SCUCB algorithm with various size of action set. The experiments are conducted with time horizon of $T=5000$ time steps with $K = \{10, 20\}$ arms and maximum possible budget of $B_{max} = \{50\}$. The algorithms are asked to select $\text{Action size} = \{2, 4, 6, 8\}$ arms as an arm subset at each time step. Once again, our SCUCB outperforms its naive counterpart by achieving much smaller cumulative regret, 
the best numerical results across algorithms are highlighted in bold text. 
\begin{table}[H]
\centering
\begin{tabularx}{0.47\textwidth}{@{}lY@{\extracolsep{\fill}}YYY@{}}
\toprule
\multicolumn{5}{c}{Cumulative regret, Bmax = 50, K = 10}                                            \\ \midrule
\multicolumn{1}{l|}{Dataset}        & \multicolumn{4}{c}{Synethetic}                                       \\ \midrule
\multicolumn{1}{l|}{Action Size}    & 2               & 4               & 6               & 8              \\
\multicolumn{1}{l|}{CUCB}           & 140.21          & 150.88          & 158.22          & 123.91         \\
\multicolumn{1}{l|}{SCUCB} & \textbf{105.58} & \textbf{113.63} & \textbf{103.42} & \textbf{88.72} \\ \bottomrule
\end{tabularx}
\begin{tabularx}{0.47\textwidth}{@{}lY@{\extracolsep{\fill}}YYY@{}}
\toprule
\multicolumn{5}{c}{Cumulative regret, Bmax = 50, K = 20}                                             \\ \midrule
\multicolumn{1}{l|}{Dataset}        & \multicolumn{4}{c}{Synethetic}                                        \\ \midrule
\multicolumn{1}{l|}{Action Size}    & 2               & 4               & 6               & 8               \\
\multicolumn{1}{l|}{CUCB}           & 302.10          & 316.09          & 340.52          & 328.80          \\
\multicolumn{1}{l|}{SCUCB} & \textbf{232.23} & \textbf{247.32} & \textbf{268.27} & \textbf{306.77} \\ \bottomrule
\end{tabularx}
\caption{Cumulative regret achieved by UCB and SCUCB with synthetic data with various $B_{max}$.}
\end{table}
To compare the two algorithms in detail, we plot the cumulative regret incurred by algorithms against the time steps. It becomes apparent that the SCUCB algorithm features a cumulative regret versus time steps line that is much smaller than the one caused by CUCB even under a higher maximum possible strategic budget. We compare the two algorithm under settings where the maximize possible budget is $B_{max} = 70, 90$ and $B_{max} = 110, 130$.
\begin{figure}[H]
\centering
\subfigure[]{
\label{Fig1}
\includegraphics[width=3.8cm]{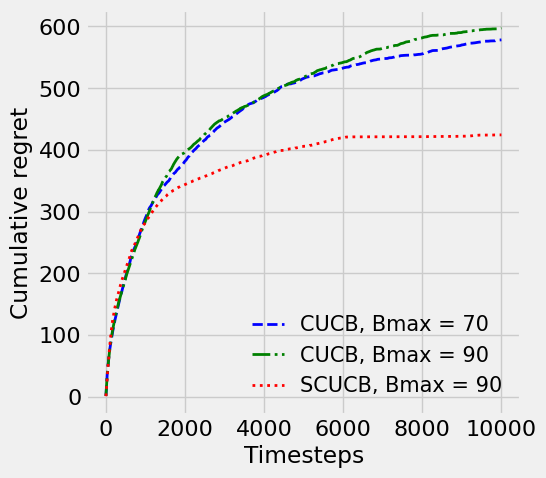}}
\subfigure[]{
\label{Fig2}
\includegraphics[width=3.8cm]{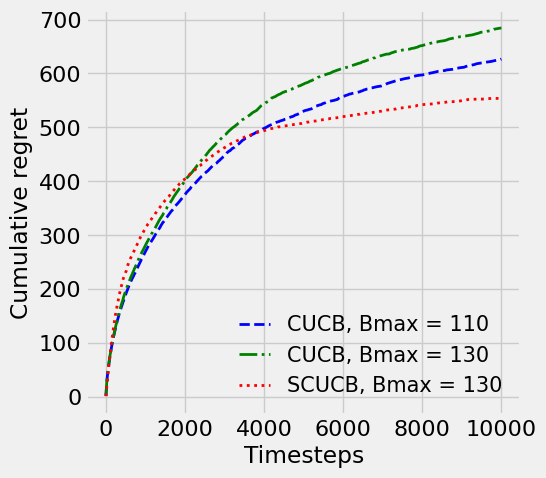}}
\caption{\ref{Fig1} Comparison of CUCB and SCUCB with $B_{max} = 70, 90$. \ref{Fig2} Comparison of CUCB and SCUCB with $B_{max} = 110, 130$.}
\end{figure}

\subsection{Online Worker Selection in Crowdsourcing Systems}
We simulated an online crowdsourcing system that has a workflow resembled by figure \ref{crowdsys} and model it as a combinatorial bandits problem. We performed an extensive empirical analysis of our SCUCB algorithm against CUCB, the combinatorial version of Thompson sampling, and EXP3. The experiments were conducted with the Amazon sentiment dataset, where Amazon item reviews are labeled 'is/is not book' or 'is/is not negative \cite{Krivosheev:2018:CCM:3290265.3274366}. We split the dataset into two, where 'is book' contains data that only has labels 'is/is not book' and 'is negative' contains data that only has labels 'is/is not negative'. Both datasets consists of 7803 reviews, 284 workers, and 1011 tasks. 
The correctness of workers and tasks can be visualized by the figures as below. Notice that most of the workers attain an accuracy of $60$ percent and above. This highlights that most of the workers we are interacting with has the ability to label most of the tasks correctly.
\begin{figure}[H]
\centering
\subfigure[]{
\label{worker}
\includegraphics[width=3.8cm]{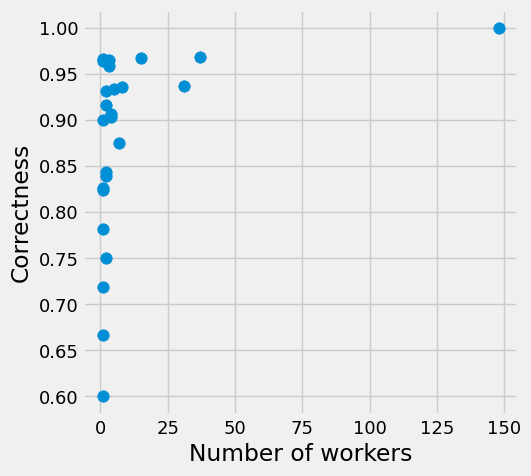}}
\subfigure[]{
\label{task}
\includegraphics[scale = 0.28]{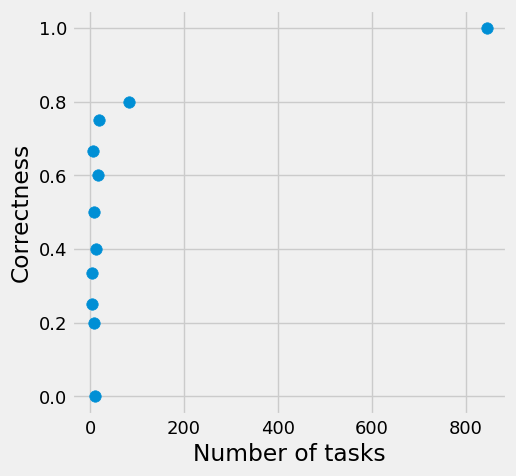}}
\caption{\ref{worker} Visualization of correctness of worker's labels. \ref{task}Visualization of correctness of task's labels.}
\end{figure}

For each task, the data contains responses from 5 workers and the crowdsourcing platform is asked to select 2, 3, or 4 workers at each time step. To measure the performance of the algorithms, we made the assumption that the reward is monotonically correlated to the accuracy of the label compared to the ground truth label. Thus, we define the reward to be 1, if the worker's label is the same as ground truth label and 0 otherwise. To model the strategic behavior of the workers, we model each worker with a randomly allocated strategic level $s \in [0,1]$ such that the worker provides its honest response with probability $s$. 

For the first set of experiments, we choose the maximum possible strategic budget to be $B_{max} = 500$. For actions size of $2, 3, 4$. We obtain the following cumulative rewards and cumulative regret results where the best performance is highlighted in bold text. To ensure reproducible results, each experiment is repeated 5 times and the averaged result is presented.

To further investigate the effect of the maximum possible strategic budget on the performance of algorithms, we fix the size of action size to be 2 and plot the cumulative reward/cumulative regret incurred by each algorithm again $B_{max} \in [100, 700]$. Regardless of the $B_{max}$ value, our proposed SCUCB algorithm consistently outperforms other methods. For better visualization, we omit the line for combinatorial Thompson sampling due to its relatively weaker performance. 

\begin{figure}[H]
\centering
\subfigure[]{
\label{culre}
\includegraphics[width=3.8cm]{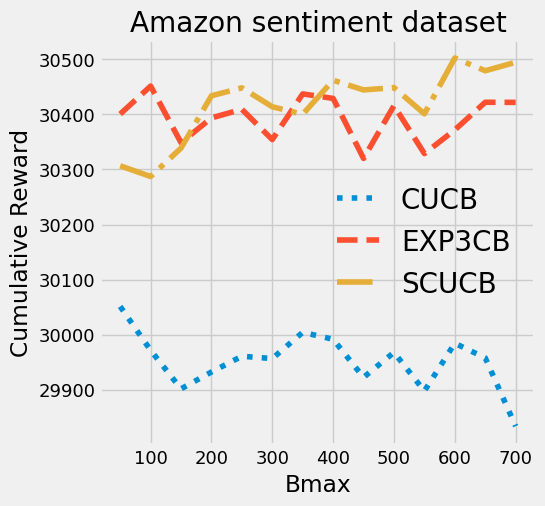}}
\subfigure[]{
\label{culreg}
\includegraphics[width=3.8cm]{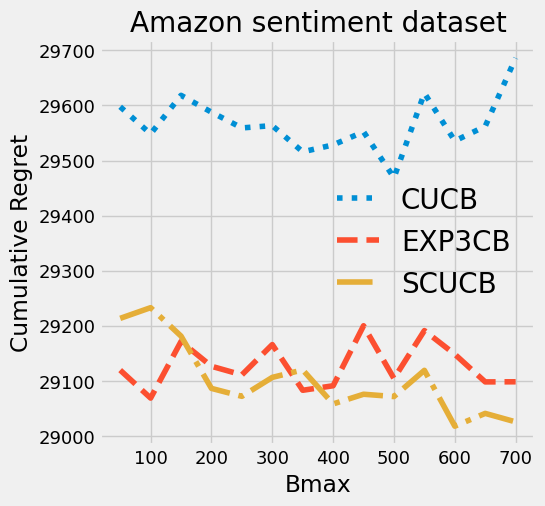}}
\caption{\ref{culre} Cumulative rewards attained by algorithms with various level of $B_{max}$. \ref{culreg} Cumulative regrets attained by algorithms with various level of $B_{max}$.  }
\end{figure}

\begin{table}[H]
\label{first}
\centering
\begin{tabularx}{0.47\textwidth}{@{}lY@{\extracolsep{\fill}}YY@{}}
\toprule
\multicolumn{4}{c}{Crowdsourcing (Cumulative Reward), Bmax = 500}                            \\ \midrule
\multicolumn{1}{l|}{Dataset}        & \multicolumn{3}{c}{is\_book}                           \\ \midrule
\multicolumn{1}{l|}{Action size}    & 2                & 3                & 4                \\ \midrule
\multicolumn{1}{l|}{CUCB}           & 31672.6          & 46181.2          & 68244.6          \\
\multicolumn{1}{l|}{TSCB}          & 19853.0          & 34226.2          & 60621.2          \\
\multicolumn{1}{l|}{EXP3}           & 31569.6          & 46570.6          & 68482.2          \\
\multicolumn{1}{l|}{SCUCB} & \textbf{32082.6} & \textbf{46623.6} & \textbf{68524.6} \\ \bottomrule
\end{tabularx}
\begin{tabularx}{0.47\textwidth}{@{}lY@{\extracolsep{\fill}}YY@{}}
\toprule
\multicolumn{4}{c}{Crowdsourcing (Cumulative Regret), Bmax = 500}                            \\ \midrule
\multicolumn{1}{l|}{Dataset}        & \multicolumn{3}{c}{is\_book}                           \\ \midrule
\multicolumn{1}{l|}{Action size}    & 2                & 3                & 4                \\ \midrule
\multicolumn{1}{l|}{CUCB}           & 27787.4          & 42198.8          & 48125.4          \\
\multicolumn{1}{l|}{TSCB}          & 39607.0          & 54153.8          & 55748.8          \\
\multicolumn{1}{l|}{EXP3}           & 27890.4          & 41809.4          & 47887.8          \\
\multicolumn{1}{l|}{SCUCB} & \textbf{27377.4} & \textbf{41756.4} & \textbf{47845.4} \\ \bottomrule
\end{tabularx}

\begin{tabularx}{0.47\textwidth}{@{}lY@{\extracolsep{\fill}}YY@{}}
\toprule
\multicolumn{4}{c}{Crowdsourcing (Cumulative Reward), Bmax = 500}                            \\ \midrule
\multicolumn{1}{l|}{Dataset}        & \multicolumn{3}{c}{is\_negative}                       \\ \midrule
\multicolumn{1}{l|}{Action size}    & 2                & 3                & 4                \\ \midrule
\multicolumn{1}{l|}{CUCB}           & 31108.8          & 44757.6          & 59424.8          \\
\multicolumn{1}{l|}{TSCB}          & 17240.6          & 32010.4          & 51109.0          \\
\multicolumn{1}{l|}{EXP3}           & 30879.0          & 44767.2          & 58207.8          \\
\multicolumn{1}{l|}{SCUCB} & \textbf{31759.0} & \textbf{44843.4} & \textbf{59441.4} \\ \bottomrule
\end{tabularx}

\begin{tabularx}{0.47\textwidth}{@{}lY@{\extracolsep{\fill}}YY@{}}
\toprule
\multicolumn{4}{c}{Crowdsourcing (Cumulative Regret), Bmax = 500}                            \\ \midrule
\multicolumn{1}{l|}{Dataset}        & \multicolumn{3}{c}{is\_negative}                       \\ \midrule
\multicolumn{1}{l|}{Action size}    & 2                & 3                & 4                \\ \midrule
\multicolumn{1}{l|}{CUCB}           & 28411.2          & 43566.2          & 56465.2          \\
\multicolumn{1}{l|}{TSCB}          & 42279.4          & 56279.6          & 64781.0          \\
\multicolumn{1}{l|}{EXP3}           & 28641.0          & 43523.4          & 57682.2          \\
\multicolumn{1}{l|}{SCUCB} & \textbf{27761.0} & \textbf{43446.6} & \textbf{56448.6} \\ \bottomrule
\end{tabularx}

\caption{Cumulative rewards and cumulative regret attained by SCUCB and baseline algorithms with various action sizes on isbook dataset and isnegative dataset.}
\end{table}

The results of our crowdsourcing experiments reveal the robustness of our SCUCB algorithm under strategic manipulations. It also indicates the performance of adversarial and stochastic bandits algorithms under strategic manipulations. The combinatorial variant of Thompson sampling is vulnerable under manipulations, which agrees with our expectation given that the algorithm is a Bayesian algorithm in nature and is heavily relying on estimating the underlying distribution of arms. Its estimation can be easily perturbed under strategic manipulation, and thus leads to undesired performance results. It is also expected that the SCUCB algorithm far outperforms its naive counterpart, CUCB algorithm. The combinatorial version of EXP3 algorithm is the most competitive algorithm with our SCUCB. As the EXP3 algorithm was originally designed for a pure adversarial setting, the algorithm has some robustness under strategic manipulations. Shown in the Figure \ref{culre} and Figure \ref{culreg}, the combinatorial EXP3 algorithm consistently shows robustness across various levels of maximum possible strategic budget. However, as our SCUCB algorithm considers the maximum possible strategic budget and thus more adaptive, SCUCB far outperforms EXP3CB as $B_{max}$ increases. 

\subsection{Online Recommendation System}

The online recommendation is a classic example of combinatorial bandits in applications. We evaluated algorithms in the latest MovieLens dataset which contains 9742 movies and 100837 ratings of the movies with each rating being $1 - 5$ \cite{10.1145/2827872}. We compare the algorithms based on the total values of ratings received, i.e. recommending a movie and receiving a rating of 5 is more desired than recommending a movie and receiving a rating of 1. 

As the dataset may consist of unbalanced data for each movie, we adapted collaborative filtering and $k$-means clustering into our evaluation system. We used the collaborative filtered for training and testing and instead of recommending movies, we clustered the movies by $k$-means clustering and asked the bandits algorithm to choose one cluster at each time. The movie with the highest rating in the chosen cluster is recommended. The workflow of the experiment setup is summarized in the above diagram.

\begin{figure}[H]
\centering
\includegraphics[width=7cm]{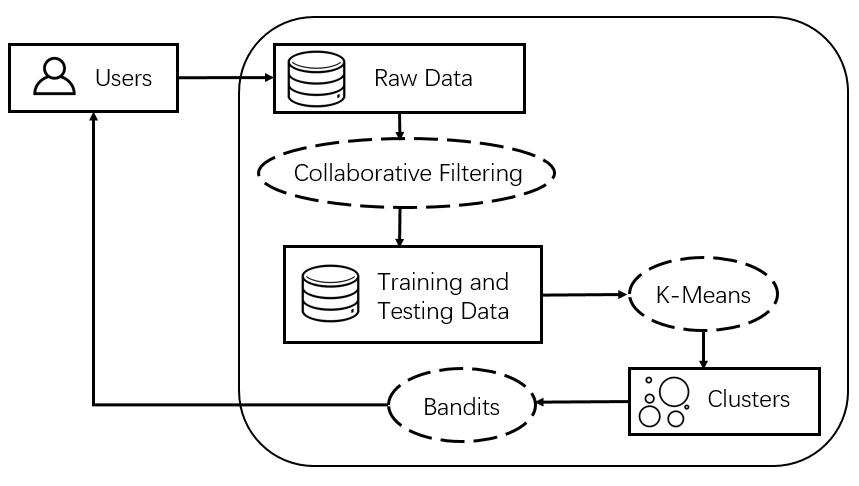}
\caption{Experiment set up of online recommendation.}
\end{figure}

\begin{table}[H]
\centering
\begin{tabularx}{0.47\textwidth}{@{}lY@{\extracolsep{\fill}}YY@{}}
\toprule
\multicolumn{4}{c}{Recsys (Cumulative Reward), T=500, 10 clusters}                          \\ \midrule
\multicolumn{1}{l|}{Dataset}       & \multicolumn{3}{c}{Movielens}                          \\ \midrule
\multicolumn{1}{l|}{Bmax}          & 30               & 50               & 70               \\ \midrule
\multicolumn{1}{l|}{UCB}           & 2297.60          & 2494.36          & 2631.02          \\
\multicolumn{1}{l|}{TS}            & 2160.14          & 2314.76          & 2468.26          \\
\multicolumn{1}{l|}{EXP3}          & 2181.11          & 2449.28          & 2430.96          \\
\multicolumn{1}{l|}{SUCB} & \textbf{2305.75} & \textbf{2314.76} & \textbf{2636.07} \\ \bottomrule
\end{tabularx}

\begin{tabularx}{0.47\textwidth}{@{}lY@{\extracolsep{\fill}}YY@{}}
\toprule
\multicolumn{4}{c}{Recsys (Cumulative Reward), T=500, 20 clusters}                          \\ \midrule
\multicolumn{1}{l|}{Dataset}       & \multicolumn{3}{c}{Movielens}                          \\ \midrule
\multicolumn{1}{l|}{Bmax}          & 30               & 50               & 70               \\ \midrule
\multicolumn{1}{l|}{UCB}           & 2469.93          & 2336.92          & 2159.25          \\
\multicolumn{1}{l|}{TS}            & 2260.20          & 1850.55          & 1905.42          \\
\multicolumn{1}{l|}{EXP3}          & 2380.74          & 2317.82          & 2025.39          \\
\multicolumn{1}{l|}{SUCB} & \textbf{2474.69} & \textbf{2341.19} & \textbf{2177.70} \\ \bottomrule
\end{tabularx}

\begin{tabularx}{0.47\textwidth}{@{}lY@{\extracolsep{\fill}}YY@{}}
\toprule
\multicolumn{4}{c}{Recsys (Cumulative Reward), T=500, 30 clusters}                                       \\ \midrule
\multicolumn{1}{l|}{Dataset}       & \multicolumn{3}{c}{Movielens}                          \\ \midrule
\multicolumn{1}{l|}{Bmax}          & 30               & 50               & 70               \\\midrule
\multicolumn{1}{l|}{UCB}           & 2443.57          & 2393.88          & 2472.31          \\
\multicolumn{1}{l|}{TS}            & 2132.16          & 2248.56          & 1855.14          \\
\multicolumn{1}{l|}{EXP3}          & 2368.42          & 2265.32          & 2339.49          \\
\multicolumn{1}{l|}{SUCB} & \textbf{2436.43} & \textbf{2397.72} & \textbf{2476.52} \\ \bottomrule
\end{tabularx}
\caption{Cumulative rewards attained by algorithms with different number of clusters $10, 20, 30$ and different level of $\{B_{max} = 30, 50, 70\}$ across 500 time steps. }
\end{table}
We performed the evaluation with various number of clusters (10, 20 or 30) and $B_{max} = 30, 50, 70$ through a time horizon of $T = 500$. The results presented below are averages over 5 runs to ensure reproducibility. The best performance across algorithms is highlighted in bold text. From the tables below, we conclude the effectiveness of SCUCB algorithm. 

\subsection{Online Influence Maximization}

We implemented the online influence maximization with an offline influence maximization algorithm TIM as our oracle \cite{10.1145/2588555.2593670}. TIM is one of the offline influence maximization algorithms that achieve asymptotic optimality. We perform the experiments on two datasets. One is a simulation dataset with 16 nodes and 44 edges, the other is a 100 nodes subset from Digg dataset, where each node represents a user from Digg website \cite{nr}. We simulate the connectivity of edges at each time step by randomly assigning each edge a connectivity probability at the start of the experiment. 

\begin{figure}[hbt!]
\centering
\subfigure[]{
\label{test_graph}
\includegraphics[width=3.6cm]{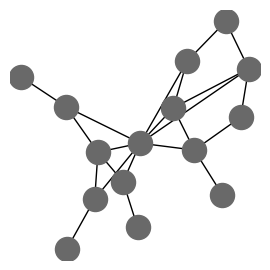}}
\subfigure[]{
\label{16c2}
\includegraphics[width=3.6cm]{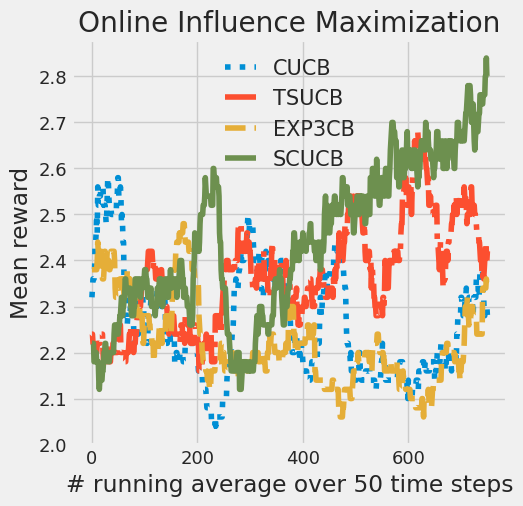}}
\caption{\ref{test_graph} Visualization of synthetic graph for evaluation. \ref{16c2} Mean rewards attained by algorithms.}
\end{figure}

Figure \ref{test_graph} visualizes the synthetic graph we created for evaluation. We then tested the algorithms over $T = 800$ time steps with action size of 2 and maximum possible strategic budget of $B_{max} = 300$. Figure \ref{16c2} shows the effectiveness of algorithms by evaluating based on the averaged nodes influenced. To ensure reproducible results, we calculate the mean of influence spread over 100 trials. For better visualization, we kept a running mean of over 50 time steps to smooth the lines. 

We then investigate the effect of action size and the results are summarized in the following tables \ref{table:5}.
\begin{table}[H]
\centering
\begin{tabularx}{0.47\textwidth}{@{}lY@{\extracolsep{\fill}}YYYY@{}}
\toprule
\multicolumn{5}{c}{OIM (Averaged Final Reward), Bmax = 200}                                   \\ \midrule
\multicolumn{1}{l|}{Dataset}        & \multicolumn{3}{c}{Synthetic, 16 nodes, 44 edges}         \\ \midrule
\multicolumn{1}{l|}{Action size}       & 2      & 4      & 6       & 8       \\ \midrule
\multicolumn{1}{c|}{CUCB}   & 5.46          & 6.54          & 10.4           & 12.5           \\
\multicolumn{1}{c|}{TSCB}  & 6.7           & 8.50           & 10.44          & 12.7           \\
\multicolumn{1}{c|}{EXP3CB} & 5.86          & 8.14          & 10.34          & 12.46          \\
\multicolumn{1}{c|}{SCUCB}  & \textbf{7.44} & \textbf{9.32} & \textbf{11.16} & \textbf{13.04} \\ \bottomrule
\end{tabularx}

\begin{tabularx}{0.47\textwidth}{@{}lY@{\extracolsep{\fill}}YY@{}}
\toprule
\multicolumn{4}{c}{OIM (Averaged Final Reward), Bmax = 200}                       \\ \midrule
\multicolumn{1}{l|}{Dataset}        & \multicolumn{3}{c}{digg, 100 nodes}         \\ \midrule
\multicolumn{1}{l|}{Action size}    & 10             & 15             & 20        \\ \midrule
\multicolumn{1}{l|}{CUCB}           & 15.64          & 20.64          & 25.32           \\
\multicolumn{1}{l|}{TSCB}          & 16.30          & 21.62          & 26.08          \\
\multicolumn{1}{l|}{EXP3}           & 12.16          & 17.76          & 21.72          \\
\multicolumn{1}{l|}{SCUCB} & \textbf{16.56} & \textbf{22.16} & \textbf{26.14} \\ \bottomrule
\end{tabularx}
\caption{Averaged final rewards attained by algorithms with synthetic and Digg dataset with action size of 2, 800 time steps and $B_{max} = 200$.}\label{table:5}
\end{table}

\section{Conclusion}
We investigate the problem of combinatorial MAB under strategic manipulations of rewards. We propose a variant of the UCB algorithm, SCUCB, which attains a regret at most $O(m \log T + m B_{max})$, with a lower bounds on the strategic budget for a malicious arm to incur a $\omega(\log T)$ regret of the bandit algorithm. 
Compared to previous studies on the bandit problems under strategic manipulations, we relax on the assumption that the algorithm has access to the cumulative strategic budget spent by each arm at each time step. 
For the robustness of bandit algorithms, we present lower bounds on the strategic budget for a malicious arm to incur a $\omega(\log T)$ regret of the bandit algorithm. 

We provide extensive empirical results on both synthetic and real datasets with a range of applications to verify the effectiveness of the proposed algorithm. Our algorithm consistently outperforms baseline algorithms that were designed for stochastic and adversarial settings.


\bibliography{sample-base}
\end{document}